\documentclass[letterpaper]{article} 
\usepackage{aaai24}  
\usepackage{times}  
\usepackage{helvet}  
\usepackage{courier}  
\usepackage[hyphens]{url}  
\usepackage{graphicx} 
\urlstyle{rm} 
\usepackage{natbib}  
\usepackage{caption} 
\frenchspacing  
\setlength{\pdfpagewidth}{8.5in}  
\setlength{\pdfpageheight}{11in}  
%
\usepackage{algorithm}
\usepackage{algorithmic}
\usepackage{amsmath,amsthm,amsfonts,amssymb}

\usepackage{color}

\input{epsf}

\newtheorem{theorem}{Theorem}

\newtheorem{remark}{Remark}
\newtheorem{definition}{Definition}
\newtheorem{assumption}{Assumption}
%
\usepackage{newfloat}
\usepackage{listings}
\DeclareCaptionStyle{ruled}{labelfont=normalfont,labelsep=colon,strut=off} 
\lstset{%
	basicstyle={\footnotesize\ttfamily},
	numbers=left,numberstyle=\footnotesize,xleftmargin=2em,
	aboveskip=0pt,belowskip=0pt,%
	showstringspaces=false,tabsize=2,breaklines=true}
\floatstyle{ruled}
\newfloat{listing}{tb}{lst}{}
\floatname{listing}{Listing}
%
\pdfinfo{
/TemplateVersion (2024.1)
}

\setcounter{secnumdepth}{0} 

\title{Learning the Causal Structure of Networked Dynamical Systems \\ under Latent Nodes and Structured Noise}
\author{
     Augusto Santos\textsuperscript{\rm 1}\footnote{Corresponding author.}, Diogo Rente\textsuperscript{\rm 2}, Rui Seabra\textsuperscript{\rm 2}, José M. F. Moura\textsuperscript{\rm 2}
}
\affiliations{
    \textsuperscript{\rm 1} Instituto de Telecomunicações-IT, Lisbon, Portugal\\

    \textsuperscript{\rm 2} Department of Electrical and Computer Engineering at Carnegie Mellon University, Pittsburgh, PA, USA\\
    
    augusto.santos@lx.it.pt, drobertd@andrew.cmu.edu, rvilarpo@andrew.cmu.edu, moura@andrew.cmu.edu.

%
}

\begin{document}

\maketitle

\begin{abstract}
This paper considers learning the hidden causal network of a linear networked dynamical system (NDS) from the time series data at some of its nodes -- partial observability. The dynamics of the NDS are driven by \emph{colored} noise that generates spurious associations across pairs of nodes, rendering the problem much harder. To address the challenge of noise correlation and partial observability, we assign to each pair of nodes a feature vector computed from the time series data of observed nodes. The feature embedding is engineered to yield structural consistency: There exists an affine hyperplane that consistently partitions the set of features, separating the feature vectors corresponding to connected pairs of nodes from those corresponding to disconnected pairs. The causal inference problem is thus addressed via clustering the designed features. We demonstrate with simple baseline supervised methods the competitive performance of the proposed causal inference mechanism under broad connectivity regimes and noise correlation levels, including a real world network.  Further, we devise novel technical guarantees of structural consistency for linear NDS under the considered regime.
\end{abstract}

\section{Introduction}

Complex Networked Dynamical Systems (NDS) are causally structured: the state of each unit or node evolves over time due to peer-to-peer interactions or local information exchange~\cite{dynamics_complex}. Examples include: brain activity~\cite{Huang_Brain,brainaugusto}; Gene Regulatory Networks~\cite{GRN0,GRN3}; or pandemics~\cite{topoepidemics,topoepidemic2}. Understanding the underlying connectivity pattern in these applications is fundamental to forecast the long term behavior of the NDS as, for example, predicting the onset of criticality~\cite{Pereira} or neurological crises~\cite{epi_pred}; ascertaining whether a strain of virus persists or dies out in the long run~\cite{singlevirus}; or for structure-informed downstream tasks, e.g., the design of mitigation measures~\cite{Net_dismantling2} in a pandemics. 

In all these applications, usually, only the time series data at some of the nodes are readily available while the underlying causal structure linking these nodes lurks underneath. The main goal of causal inference or structure identification is to map out the underlying causal architecture of a complex system, i.e., to enable consistent inference of the latent network structure from the observed data.



For the most part, causal inference is addressed in two phases: $i)$ \emph{Estimation Phase}--- a scalar value is assigned to each pair of nodes as a proxy to the pair's coupling strength or affinity (e.g., mutual information); $ii)$ \emph{Classification Phase}--- a thresholding or hypothesis testing is applied to the estimated couplings to draw the network, consistently. 

Reference~\cite{SMachado} departed from this approach and addressed the framework of partial observability (with \emph{diagonal} or uncorrelated noise), under a novel feature based paradigm. Instead of a scalar-based estimation for the pairs' affinity, a feature vector is assigned to each pair of nodes, and structure is recovered by leveraging certain identifiability properties of the engineered features in a higher-dimensional space (instead of thresholding). Desirable characteristics of this feature based approach include: $i)$ \emph{Separability} \textemdash there exists a manifold that separates the features of connected pairs from those of disconnected pairs; $ii)$ \emph{Stability}\textemdash this separation manifold should not be \emph{too sensitive} to differences in the regimes of connectivity, observability and noise correlation of the underlying complex system; and $iii)$ \emph{Locality}\textemdash the feature of each pair can be computed from the time series of each pair (or neighboring pairs thereof). \emph{Separability} is a necessary property to consistently cluster the features, i.e., for structure inference; \emph{stability} renders supervised ML tools (e.g., SVM) amenable to generalization;  and \emph{locality} of the features is crucial for large-scale systems where only a subset of nodes can be feasibly observable. 



However, in most applications, the underlying excitation noise exhibits nontrivial correlation structure. Depending on the regime of observation or noise structure, information about the main object of inference may be fundamentally lost~\cite{Barbier_noise}. The level of noise plays a major role in the feasibility of statistical inference tasks in that the problem can be \emph{impossible}, \emph{hard} (possible, but no known polynomial-time algorithm), or \emph{easy}, depending on the noise level~\cite{Barbier_Phase}. It is thus, critical to understand the impact of the noise structure on inference problems.

In this work, we address causal inference under the delicate regime of partial observability and structured noise. For this structure identification problem, and defying common wisdom in statistical physics, we show that structural information is preserved in the partially observed time series irrespective of the level of noise correlations, as long as they are sufficiently homogeneous across pairs (Theorem~$2$). Under greater correlation heterogeneity, we show that exogenous interventions help recover feasibility. 

Our contribution is twofold: $i)$ we establish a novel identifiability condition for linear networked dynamical systems (Theorem~$2$); $ii)$ we propose novel features to devise a Machine Learning based causal inference mechanism exhibiting competitive performance under the referred regime. Point $i)$ provides, in particular, novel guarantees for linear networked dynamical systems, where the information of the underlying network structure is contained in the observed time series data (i.e., it is not fundamentally lost).

\section{Related Work}

There is no universal method for causal inference, i.e., there does not exist an algorithm that is consistent and optimal across a broad range of attributes, including: 1) dynamical laws; 2) structural patterns -- sparse, dense networks; 3) regimes of observability (full or partial); and 4) statistical properties of the input or perturbation noise. For example, if the observed data follow a Gaussian multivariate distribution, the \textit{Precision} matrix -- or regularized versions thereof, e.g., Graphical Lasso -- is a structurally consistent estimator \cite{lohwainwright-2013}; if the samples follow a ferromagnetic Ising model distribution, then the \textit{correlation} matrix is a possible consistent estimator over sparse networks~\cite{Bento2009}; Chow-Liu's mutual information based seminal algorithm \cite{ChowLiu} is structurally consistent for a great family of probability distributions generating the i.i.d.~samples, but it is specialized to trees; Granger~\cite{Geigeretal15,tomo_journal_proceedings,tomo_journal,R1minusR3} is consistent for linear dynamical systems (even under partial observability), but it exhibits compromised performance under colored noise and partial observability; conditional independence tests and corresponding measures seldom fit densely connected networked systems. 

For a recent comprehensive discussion on related causal inference works, please refer to~\cite{SMachado} or~\cite{tomo_journal_proceedings}. Most works focus on sparse networks -- by methods primarily promoting model-parsimony -- and/or \emph{unstructured} excitation noise, i.e., independence of the exogenous noise across distinct nodes in the system. While this renders the inference task technically tractable, for the most part, it is an overall unrealistic assumption. For example, this noise independence across nodes is a fundamental assumption in Structural Causal Models (SCM)~\cite{pearl_2009,Elements_Causal_Inference} and underlies the structural consistency of conditional independence tests based algorithms~\cite{spirtes01a,Spirtes2000,anandkumar13,AnandkumarValluvanAOS} as well as other ML-based approaches~\cite{Causal_Transformer,Irregular_Time_Series_Causal,Earth_Systems_Causation,CUTS_Neural_Causal,Granger_Causal_Interpretable}. Several works explore linear networked dynamical systems~\cite{MaterassiSalapakaCDC2012,MaterassiSalapakaCDC2015,NIPS_Bento,tomo_journal,open_Journal} tailored to sparse networks and uncorrelated noise. The recent reference~\cite{Salapaka_Colored} considers linear dynamic influence models (LDIMs), defined in the frequency domain, with correlated noise and develops a Wiener based approach specialized to sparse networks -- leveraging on a sparse $+$ low rank method. The reference makes  important limiting assumptions on the  noise correlation: $i)$~it results from affine interactions; and, $ii)$~the  excitation noise is assumed independent across latent nodes.    

In this work, we address the causal inference problem for linear networked dynamical systems (NDS) when the noise across nodes is correlated and under the partial observability regime (only a subset of nodes is observed). We provide  novel technical guarantees of structural consistency or identifiability and show that the proposed approach exhibits significant superior performance than benchmark alternatives across distinct regimes of network connectivity, observability, and noise correlation. We assume no knowledge of the underlying noise covariance matrix or the network of interactions, and we impose no assumptions on the particular nature of the noise correlations.

\begin{figure} [!t]
	\begin{center}	\includegraphics[scale= 0.33]{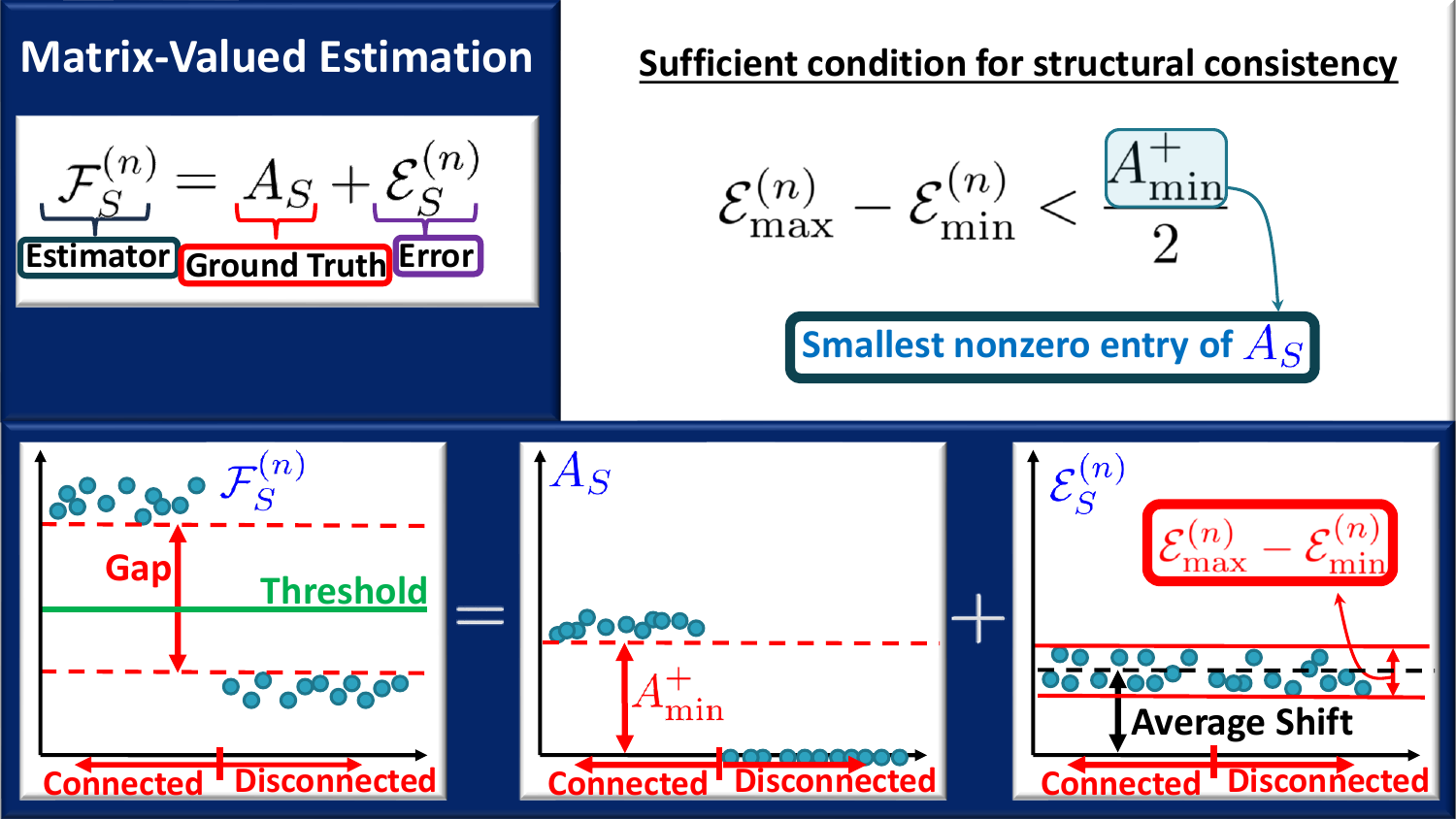} 
		\caption{Sufficient condition for structural consistency. The plots represent the off-diagonal entries of the corresponding matrices: the estimator, the ground truth and the error.}
		\label{fig:SC}
	\end{center}
\end{figure}


\section{Problem Formulation}
\label{sec:probform}


The time series data streams from the linear Networked Dynamical System (NDS) given by
\begin{equation}\label{eq:model}
\mathbf{y}(n+1)=A\mathbf{y}(n)+\mathbf{x}(n+1),
\end{equation}
where~$\mathbf{y}(n)=\left[y_1(n)\,\,y_2(n)\,\,\ldots\,\,y_N(n)\right]^{\top}\in\mathbb{R}^N$ represents the state-vector of the $N$-dimensional NDS collecting the state~$y_i(n)$ of each node~$i=1,2,\ldots,N$ at time $n$; $\mathbf{x}(n)$ is the zero mean input noise at time~$n$ with finite first and second order moments, i.e., it admits a well-defined covariance matrix~$\Sigma_x\in\mathbb{S}_+^N$, and $\left(\mathbf{x}(n)\right)_{n\in\mathbb{N}}$ is independent across time~$n$; $A\in \mathbb{R}_{+}^{N}$ is the non-negative symmetric interaction matrix describing the graph linking the nodes as its support. We assume that the spectral radius of~$A$ is smaller than~$1$, $\rho(A)<1$, i.e., the dynamical system is assumed stable. Namely, we assume that $A$ is stochastic up to a multiplicative constant, i.e., $A=\rho \overline{A}$ where $\overline{A}$ is stochastic and $\rho<1$.


The discrete-time linear NDS~\eqref{eq:model} emerges naturally from time discretization of continuous-time linear stochastic differential equations as, e.g., in~\cite{NIPS_Bento,GRN0}. These linear stochastic networked dynamical systems may also describe nonlinear networked dynamical systems operating close to an equilibrium state with \emph{small} enough noise level~\cite{Ching2017ReconstructingLI, Noise_Jien_Ren}, in the spirit of the Hartman-Grobman linearization Theorem~\cite{Barreira}. Linearity can also be recovered via appropriate Koopman-based embedding in higher dimensional spaces~\cite{Koopman_Gon,Nonlinear_to_linear,Koopman_Linear}, change of variables, or statistical transformations like Box-Cox~\cite{BoxCox2}. These render structural inference methods developed for the linear NDS~\eqref{eq:model} consistent or useful over a broad class of nonlinear NDS.

The goal of graph recovery under \emph{full} observability is to infer consistently the support of the interaction matrix~$A$ from the time series data $\left\{\mathbf{y}(n)\right\}_{n=1}^{T}$. Under partial observability~\cite{SMachado,tomo_journal,Mei_latent,tomo_journal_proceedings,Geigeretal15,open_Journal}, it is to recover the support of the submatrix $A_{S}$ that links a set $S:=\left\{m_1,m_2,\ldots,m_{\left|S\right|}\right\}$ of observable nodes from their time series  $\left\{\left[\mathbf{y}(n)\right]_S\right\}_{n=1}^T$, where $\left[\mathbf{y}(n)\right]_S\overset{\Delta}= \left[\mathbf{y}_{m_1}(n)\,\,\, \mathbf{y}_{m_2}(n)\cdots \mathbf{y}_{m_{\left|S\right|}}(n)\right]^{\top} \in\mathbb{R}^{\left|S\right|}$
is the $\left|S\right|$-dimensional vector collecting the states of the observed nodes at each time $n$, $\left|S\right|<N$. 


Define the $k^{\text{th}}$ lag empirical covariance matrix $\widehat{R}_k(n)\overset{\Delta}=\frac{1}{n}\sum_{\ell=0}^{n-1} \mathbf{y}(\ell+k)\mathbf{y}(\ell)^{\top}$.
Similarly to as done in~\cite{SMachado}, we refer to a \textit{matrix-valued estimator} as any map whose input is given by the (observed) time series data and the output is given by a matrix, namely,
\begin{equation}\label{eqn:mtrxestimator-1}
\begin{array}{cccc}
F^{(n)}_S\,:\, & \mathbb{R}^{\left|S\right|\times n} & \longrightarrow & \mathbb{R}^{\left|S\right|\times \left|S\right|}\\
       & \left\{\left[\mathbf{y}(\ell)\right]_{S}\right\}_{\ell=0}^{n-1} & \longmapsto & \mathcal{F}^{(n)}_S
\end{array},
\end{equation}
for any given $n\in\mathbb{N}$. The $ij^{\text{th}}$ entry of the matrix~$\mathcal{F}^{(n)}_S$ reflects the strength of the edge from $j$ to $i$ from $n$ time series samples. That is, $\mathcal{F}^{(n)}_S$ is a matrix summarizing the estimated affinities between the pairs of nodes in the system. The estimator $F^{(n)}_S$ is \emph{structurally consistent} whenever the estimated strength $\mathcal{F}_{ij}^{(n)}$ of any connected pair $(i,j)$ lies above the estimated strength of any disconnected pair. In this case, the underlying network structure is recovered via proper thresholding. This is illustrated in Fig.~\ref{fig:SC} and it is formalized with the following definition presented in~\cite{SMachado}.


\begin{definition}[Structural Consistency]\label{def:structuralconsistency}
A matrix-valued estimator~$F^{(n)}$ is structurally consistent with high probability, whenever there exists a threshold~$\tau$ so that,
$\mathbb{P}\left(\mathcal{F}^{(n)}_{ij} > \tau\right)\overset{n\rightarrow \infty}\longrightarrow 1 \Longleftrightarrow j\rightarrow i$, i.e., $j$ links to $i$ if and only if the $ij^{\text{th}}$ entry of the estimator matrix $\mathcal{F}^{(n)}$ lies above the threshold $\tau$, given a large enough number of samples~$n$.
\end{definition}

That is, up to a proper threshold~$\tau\in \mathbb{R}$, the output matrix~$\mathcal{F}^{(n)}$ reflects the underlying structure of the graph in that $\left[{\sf Supp}(A_{S})\right]_{ij}=\mathbf{1}_{\left\{\mathcal{F}^{(n)}_{ij}>\tau\right\}}$, for all pairs $i\neq j$ w.h.p.

We adopt the following three assumptions throughout.

\begin{assumption} [NDS Stability] The interaction matrix $A$ is assumed symmetric, nonnegative and stable, i.e., $\rho(A)<1$, where $\rho(A)$ is the spectral radius of $A$.
\end{assumption}

\begin{assumption} [Nodewise Homogeneity] We assume that $\sigma^2=\mathbb{E}\left[\mathbf{x}_i^2\right]$ for all $i$. 
\end{assumption}

Remark that $\sigma^2\geq \mathbb{E}\left[\mathbf{x}_i\mathbf{x}_j\right]$ $\forall{i,j}$, necessarily. Indeed, 
\begin{equation}\label{ineq:heterogeneity}
\begin{array}{ccl}
0\leq E[\left(\mathbf{x}_{i}-\mathbf{x}_k\right)^2] \!\!& \!\! = \!\!& \!\! E[\mathbf{x}_{i}^2]+E[\mathbf{x}_k^2]-2E[\mathbf{x}_{i} \mathbf{x}_k]\\
\!\!& \!\! =\!\!& \!\! 2\sigma^2-2E[\mathbf{x}_{i} \mathbf{x}_k].
\end{array}  
\end{equation}

\begin{assumption}[Pairwise Distinguishability] We assume that $\sigma^2>\mathbb{E}\left[\mathbf{x}_i \mathbf{x}_j\right]=\left[\Sigma_{x}\right]_{ij}$ for all $i\neq j$. That is, the off-diagonal entries of the noise covariance matrix are strictly smaller than the diagonal.
\end{assumption}

Assumption~$1$ makes the NDS stable and it is thus natural. Assumption~$2$ implies Assumption~$3$ whenever there are no pair of nodes with the same noise source, i.e., $\mathbb{P}\left(\mathbf{x}_i\neq \mathbf{x}_j\right)>0$, for $i\neq j$. Indeed, from~\eqref{ineq:heterogeneity}, $\sigma^2=E\left[\mathbf{x}_{i}\mathbf{x}_k\right]$ if and only if $\mathbb{E}\left[\left(\mathbf{x}_i-\mathbf{x}_k\right)^2\right]=0$ which implies $\mathbf{x}_i\overset{{\sf a.s.}}=\mathbf{x}_k$. Therefore, Assumption~$3$ is mild under Assumption~$2$. 

In view of Assumptions~$2$ and $3$, we shall refer to 
$ \sigma^2_{{\sf gap}}\overset{\Delta}=\sigma^2-\max_{i\neq j} \mathbb{E}\left[\mathbf{x}_i\mathbf{x}_j\right]>0$
as the gap between the diagonal and the off-diagonal entries of the noise covariance $\Sigma_x$.

\section{Structural Consistency under Noise Structure and Partial Observability}

We write a matrix-valued estimator, see equation~\eqref{eqn:mtrxestimator-1},  as
$\mathcal{F}_S^{(n)}=A_{S}+\mathcal{E}^{(n)}_S$
where, under partial observability, $A_S$ is the underlying ground truth interaction matrix, $\mathcal{E}^{(n)}_S$ is the (matrix) error term, and $S$ is the set of observable nodes. As we now show, for structural consistency, we do not need the error term to be zero (or to converge to zero with~$n$). Define ${\sf Osc}\left(\mathcal{E}^{(n)}_S\right)\overset{\Delta}=\mathcal{E}^{(n)}_{\max}-\mathcal{E}^{(n)}_{\min}$ and let $A^+_{\min}$ be the smallest nonzero entry of $A_S$,
where $\mathcal{E}^{(n)}_{\max}$ and $\mathcal{E}^{(n)}_{\min}$ are the maximum and minimum off-diagonal entries of the error matrix $\mathcal{E}_S^{(n)}$. Then, if, with high probability (w.h.p.),
\begin{equation}\label{ineq:separab}
{\sf Osc}\left(\mathcal{E}^{(n)}_S\right)\leq \frac{A^{+}_{\min}}{2},
\end{equation}
it is easily seen that $F^{(n)}_S$ is structurally consistent. In fact, condition~\eqref{ineq:separab} precludes \emph{inversion} and consequent loss of structural information in Definition~\ref{def:structuralconsistency} in the following sense: If $ij$ is a connected pair and $k\ell$ is a disconnected pair, then constraint~\eqref{ineq:separab} implies that $\mathcal{F}_{ij}^{(n)}> \mathcal{F}^{(n)}_{k\ell}$ necessarily. This means that, under~\eqref{ineq:separab}, structure can be recovered via properly thresholding the off-diagonal entries of the matrix~$\mathcal{F}^{(n)}_S$, otherwise, structural information is lost. Further, when the error matrix is \emph{flat}, i.e., ${\sf Off}\left(\mathcal{E}_S^{(n)}\right)\propto {\sf Off}\left(\mathbf{1}\mathbf{1}^{\top}\right)$, all the entries of $F^{(n)}$ are the entries of $A_S$ shifted by the same quantity. Therefore, again, structural information is preserved. 
Fig.~\ref{fig:SC} offers an illustration of this discussion.

\textbf{Example: Granger estimator.} Reference~\cite{tomo_journal_proceedings} studied the error $\mathcal{E}_{S}$ for the Granger estimator $\left[R_1\right]_S\left(\left[R_0\right]_S\right)^{-1}$, under partial observability, where $R_1$ and $R_0$ are the limit one-lag and correlation matrices, respectively. Under full observability, Granger recovers the underlying interaction matrix~$A$ w.h.p.~since $R_1\left(R_0\right)^{-1}=A$. In other words, the limiting error matrix $\mathcal{E}$ associated with Granger under full observability is zero (trivially flat). However, under partial observability, the limiting error matrix associated with Granger is given by~\cite{tomo_journal_proceedings}
$\mathcal{E}_S = A_{SS'}\left(\left[R_0^{-1}\right]_{S'}\right)^{-1}\left[A^2\right]_{S'S}$, 
where $S'$ is the set of latent nodes (complement of $S$). For diagonal and correlated noise structures, this leads to 
\begin{equation}
\begin{array}{lcl}
 \mathcal{E}^{({\sf diagonal})}_S & = & A_{SS'}\left(I_{S'}-\left[A^2\right]_{S'}\right)^{-1}\left[A^2\right]_{S'S}\\
\mathcal{E}_S^{({\sf colored})} & = &  A_{SS'}\left(\left[(\sum_{i=0}^{\infty} A^i \Sigma_x A^i)^{-1}\right]_{S'}\right)^{-1}\left[A^2\right]_{S'S}
 \end{array}.
\end{equation}
The error $\mathcal{E}_S^{({\sf diagonal})}$ has been thoroughly studied in~\cite{tomo_journal_proceedings} that shows structural consistency of the Granger estimator under partial observability and diagonal noise. However, even for Granger, there is no known analysis of $\mathcal{E}_S^{({\sf colored})}$. In fact, under  colored noise and partial observability, Granger's performance degrades.\hfill$\small \square$

In summary, the behavior of the error matrix~$\mathcal{E}_S^{(n)}$ determines  whether structural information is preserved in the observed time series or fundamentally lost: an estimator $F^{(n)}_S$ is structurally consistent whenever the error matrix is flat enough w.h.p. However, given a particular estimator, this error term responds differently to distinct regimes of $i)$~connectivity of the underlying networked dynamical system, $ii)$~observability, or $iii)$~noise structure. This discussion shows the importance of two delicate steps in structure estimation: $i)$~the characterization of the error term; and $ii)$~the analysis of the \emph{flatness} of the error. Indeed, what matters is the \textit{variability} and not the (average) drift of its entries. It also alludes to the difficulty in handling the noise structure.

This paper addresses these questions. The next section characterizes the error term for the estimator $\widehat{R}_1(n)-\widehat{R}_3(n)$ as a function of the covariance matrix~$\Sigma_x$. The following section  establishes a sufficient condition on $\Sigma_x$ to guarantee that the error is flat enough, and so guaranteeing w.h.p.~structural consistency of $\widehat{R}_1(n)-\widehat{R}_3(n)$. This provides a sufficient condition that fundamentally asserts when structural information is preserved in the observed time series data in the regime of partial observability and colored noise.   


\section{Structural Consistency: Error Characterization}
The estimator~$\widehat{R}_1(n)-\widehat{R}_3(n)$ was first  proposed and studied recently in~\cite{R1minusR3}  under diagonal noise. In contrast, we characterize the limiting error term of this estimator under very broad conditions of full and partial observability and correlated noise, see 
Theorem~\ref{th:representation} below. This will be used in the next section to develop novel conditions of structural consistency under partial observability and colored noise for the networked dynamical system~\eqref{eq:model}. These are  conditions on the covariance matrix~$\Sigma_x$ that if verified guarantee that structural information is preserved and can be recovered  from the observed time series.   

Before proceeding to the main result, observe that, given a covariance matrix $\Sigma_x\in\mathbb{S}_{+}^{N}$ satisfying Assumptions~$2$ and~$3$, we can uniquely represent it as
\begin{equation}\label{eq:reprdecomp}
\Sigma_x:=\sigma^2_{{\sf gap}}I+\beta\mathbf{1}\mathbf{1}^{\top}+\overline{\Sigma} 
\end{equation}
where we have defined $\sigma^2_{{\sf gap}} \overset{\Delta}= \sigma^2-\max_{i\neq j} \mathbb{E}\left[\mathbf{x}_i\mathbf{x}_j\right]>0$ and $\beta \overset{\Delta}= \frac{1}{N(N-1)}\mathbf{1}^{\top}{\sf Off}(\Sigma_x)$,
with ${\sf Off}(\Sigma_x)$ representing the vector collecting the off-diagonal entries of the matrix $\Sigma_x$. Thus, $\beta$ stands for the average of the off-diagonals of $\Sigma_x$. 

The representation in equation~\eqref{eq:reprdecomp} decomposes the covariance into: $i)$~a diagonal matrix $\sigma^2_{{\sf gap}}I$; $ii)$~the average \emph{offset} matrix $\beta\mathbf{1}\mathbf{1}^{\top}$; and $iii)$~the matrix $\overline{\Sigma}$ containing the variability of the off-diagonal entries of $\Sigma_x$. This will be useful.



\begin{theorem}[Error Characterization]\label{th:representation}
Under Assumptions~$1$, $2$, and~$3$, for the NDS~\eqref{eq:model}, we have
\begin{equation}\label{eq:color}
\frac{1}{\sigma_{\sf gap}^2}\left(\widehat{R}_1(n)-\widehat{R}_3(n)\right) \overset{n\rightarrow \infty}\longrightarrow  A +\mathcal{E},
\end{equation}
with
\begin{equation}\label{eq:mathcalE-1}
\mathcal{E}\overset{\Delta}=\frac{1}{\sigma_{\sf gap}^2}\left[\beta\rho\mathbf{1}\mathbf{1}^{\top}+\left(I-A^2\right)  \left(\sum_{i=0}^{\infty} A^{i+1} \overline{\Sigma} A^i\right)\right],
\end{equation}
where the convergence holds in probability. Accordingly, under partial observability, we have
\begin{equation}\label{eq:partial}
\frac{1}{\sigma_{\sf gap}^2}\left(\left[\widehat{R}_1(n)\right]_S-\left[\widehat{R}_3(n)\right]_S\right) \overset{n\rightarrow \infty}\longrightarrow A_S + \mathcal{E}_{S}.
\end{equation}
\end{theorem}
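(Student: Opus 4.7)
The plan is to compute the limiting matrix $R_1 - R_3$ in closed form using the MA representation of the stable NDS and then invoke a standard ergodic convergence result to promote the empirical covariances to their population counterparts. First, under Assumption 1 the spectral radius satisfies $\rho(A) < 1$, so iterating \eqref{eq:model} yields the well-defined moving-average representation $\mathbf{y}(n) = \sum_{i=0}^{\infty} A^i \mathbf{x}(n-i)$, and the lag-$k$ population covariance satisfies $R_k \overset{\Delta}{=} \mathbb{E}[\mathbf{y}(n+k)\mathbf{y}(n)^\top] = \sum_{i=0}^{\infty} A^{k+i}\Sigma_x A^i = A^k R_0$, where I use symmetry of $A$ and the absolute convergence of the series (driven by $\rho(A)<1$). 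From here, $R_1 - R_3 = (A - A^3) R_0 = A(I - A^2) R_0$.

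Second, I would plug the decomposition \eqref{eq:reprdecomp}, $\Sigma_x = \sigma_{\sf gap}^2 I + \beta \mathbf{1}\mathbf{1}^\top + \overline{\Sigma}$, into the Lyapunov series $R_0 = \sum_{i=0}^{\infty} A^i \Sigma_x A^i$ and treat the three terms separately. The diagonal piece gives $\sigma_{\sf gap}^2 (I - A^2)^{-1}$, so upon left-multiplying by $A(I - A^2)$ it collapses cleanly to $\sigma_{\sf gap}^2 A$, which is the dominant signal term. For the rank-one piece, I would exploit the structural fact that $A = \rho \overline{A}$ with $\overline{A}$ stochastic, so $A\mathbf{1} = \rho \mathbf{1}$ and (by symmetry) $\mathbf{1}^\top A = \rho \mathbf{1}^\top$; hence $A^i \mathbf{1}\mathbf{1}^\top A^i = \rho^{2i} \mathbf{1}\mathbf{1}^\top$, and the series sums to $\frac{1}{1-\rho^2}\mathbf{1}\mathbf{1}^\top$. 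Applying $A(I - A^2)$ produces the factor $\rho(1-\rho^2)$, which cancels the denominator and leaves exactly $\beta \rho \mathbf{1}\mathbf{1}^\top$. The third piece is handled by pulling $A$ inside the sum so that $A(I - A^2)\sum_{i=0}^{\infty} A^i \overline{\Sigma} A^i = (I - A^2)\sum_{i=0}^{\infty} A^{i+1} \overline{\Sigma} A^i$, which is precisely the residual appearing in $\mathcal{E}$. Summing and dividing by $\sigma_{\sf gap}^2$ gives \eqref{eq:mathcalE-1}.

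Third, to upgrade the identity $R_1 - R_3 = \sigma_{\sf gap}^2(A + \mathcal{E})$ to the claimed in-probability convergence of $\widehat{R}_1(n) - \widehat{R}_3(n)$, I would invoke a law of large numbers for the strictly stationary, ergodic process $\{\mathbf{y}(n)\}$ (strict stationarity follows from the MA representation and the i.i.d.\ assumption on $\mathbf{x}(n)$; ergodicity and finite second moments give $\widehat{R}_k(n) \to R_k$ entrywise in probability). Since convergence in probability is preserved under the entrywise linear combination $\widehat{R}_1(n) - \widehat{R}_3(n)$, equation \eqref{eq:color} follows, and \eqref{eq:partial} is obtained by the continuous mapping theorem applied to the coordinate projection $M \mapsto M_S$.

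\textbf{Main obstacle.} The algebra is mostly bookkeeping; the only nontrivial step is the rank-one cancellation that transforms the $\beta \mathbf{1}\mathbf{1}^\top$ piece of $\Sigma_x$ into the clean $\beta \rho \mathbf{1}\mathbf{1}^\top$ offset in $\mathcal{E}$. This relies crucially on $A\mathbf{1} = \rho \mathbf{1}$, i.e., on the up-to-scaling stochasticity of $A$ stated right after Assumption 1; without it, the $\beta$-term would not collapse to a flat offset and would instead contaminate the inference with a structurally non-flat error. A secondary issue is rigor of the empirical-to-population convergence when the driving noise is only assumed to have finite second moments and to be i.i.d.\ in time (but correlated across nodes); I would handle this by noting that each entry of $\widehat{R}_k(n)$ is a Cesàro average of a strictly stationary ergodic sequence with finite mean, so Birkhoff's ergodic theorem (or equivalently a mean-square LLN using the summability of the MA coefficients) delivers convergence in probability.
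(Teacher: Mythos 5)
Your proposal is correct and follows essentially the same route as the paper: decompose $\Sigma_x$ as in \eqref{eq:reprdecomp}, use $R_k = A^k R_0$ with the Lyapunov series for $R_0$, exploit $A\mathbf{1}=\rho\mathbf{1}$ for the rank-one cancellation yielding $\beta\rho\mathbf{1}\mathbf{1}^\top$, and identify the $\overline{\Sigma}$ residual, with partial observability following entrywise. The only cosmetic difference is the final convergence step, where the paper defers to a modification of the argument in the prior work on $\widehat{R}_1-\widehat{R}_3$ while you spell out an ergodic/mean-square LLN argument, which is an equally valid (and somewhat more self-contained) justification.
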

\begin{proof}
Equation~\eqref{eq:color} is entrywise, so~\eqref{eq:partial} follows directly from~\eqref{eq:color}. We prove~\eqref{eq:color}. 
Recall that the limit covariance matrices are given in terms of the interaction matrix $A$ as $R_0\overset{\Delta}=\lim\limits_{n\rightarrow \infty}R_0(n)=\sum_{i=0}^{\infty} A^i \Sigma_x A^i$  (solution to the Lyapunov equation) and $R_k=A^k R_0$ for all $k\in\mathbb{N}$. 
These expressions for $R_0$ and $R_k$, with representation~\eqref{eq:reprdecomp}, lead to
\begin{equation}
\left\{\begin{array}{ccl} R_1 & = & \sigma_{\sf gap}^2 \sum_{i=0}^{\infty} A^{2i+1} + \beta \sum_{i=0}^{\infty} A^{i+1} \mathbf{1}\mathbf{1}^{\top} A^i \\
& & \hspace{2.53cm}+  \sum_{i=0}^{\infty} A^{i+1} \overline{\Sigma} A^i\\
&&\\
R_3 & = &  \sigma_{\sf gap}^2 \sum_{i=0}^{\infty} A^{2i+3} + \beta \sum_{i=0}^{\infty} A^{i+3} \mathbf{1}\mathbf{1}^{\top} A^i\\
& &  \hspace{2.53cm}+\sum_{i=0}^{\infty} A^{i+3} \overline{\Sigma} A^i,
\end{array}.\right.
\end{equation}
Now, $\sigma_{\sf gap}^2 \sum_{i=0}^{\infty} A^{2i+1}-\sigma_{\sf gap}^2 \sum_{i=0}^{\infty} A^{2i+3}=\sigma_{\sf gap}^2 A$ and 
\begin{align*}
&\beta \left(\sum_{i=0}^{\infty} A^{i+1} \mathbf{1}\mathbf{1}^{\top} A^i- \sum_{i=0}^{\infty} A^{i+3} \mathbf{1}\mathbf{1}^{\top} A^i\right) \\
&=\beta \left(I-A^2\right)  \left(\sum_{i=0}^{\infty} A^{i+1} \mathbf{1}\mathbf{1}^{\top} A^i\right) \\
& =\beta \left(I-A^2\right) A \mathbf{1}\mathbf{1}^{\top} (\sum_{i=0}^{\infty} \rho^{2i})
= \frac{\beta \rho (1-\rho^2)}{(1-\rho^2)}\mathbf{1}\mathbf{1}^{\top}\!\!=\beta \rho \mathbf{1}\mathbf{1}^{\top}.
\end{align*}

Finally, the last terms in $R_1$ and $R_3$ lead to the second term in~\eqref{eq:mathcalE-1}. This
shows the limits~\eqref{eq:color} (for full observability) and~\eqref{eq:partial} (for partial observability). 

Define $R_k=\lim\limits_{n\rightarrow\infty}R_k(n)=\lim\limits_{n\rightarrow\infty}\mathbb{E}\left[\mathbf{y}(n+k)\mathbf{y}(n)^\top\right]$. By modifying the proof for diagonal noise $\Sigma_x=\sigma^2 I$ in~\cite{R1minusR3}, we can show that, for general (spatially) correlated noise $\Sigma_x$, but independent across time, the empirical covariances  $\widehat{R}_1(n)$ or $\widehat{R}_3(n)$ converge in probability to the exact limit covariances $R_1$ and $R_3$.
\end{proof}


Theorem~\ref{th:representation} characterizes the error term in terms of the covariance matrix~$\Sigma_x$. As a Corollary to Theorem~$1$, if~$\Sigma_x$ is \emph{flat}, i.e., $\Sigma_x=\sigma^{2}_{\sf gap}I+\beta\mathbf{1}\mathbf{1}^{\top}$, then the error term associated with the estimator $\widehat{R}_1(n)-\widehat{R}_3(n)$ is flat as well, implying that the estimator is structurally consistent w.h.p. 



Theorem~\ref{th:representation} further offers an intuitive interpretation for the impact of the noise structure~$\Sigma_x$ on the causal inference problem. In particular, the hyperparameter~$\beta$ controls the drift or shift that the underlying ground truth matrix $A_S$ suffers under the estimation $\widehat{R}_1(n)-\widehat{R}_3(n)$. 
On the other hand, the \emph{oscillation} of the off diagonal entries given in~$\overline{\Sigma}$ implies a perturbation on the entries of $A_S$, possibly compromising structural consistency. Fig.~\ref{fig:color} offers an illustration.

\textbf{Features separability.} Lemma $1$ in~\cite{SMachado} establishes that structural consistency of $\widehat{R}_1(n)-\widehat{R}_3(n)$ implies that the set of features~$\left\{\mathcal{F}_{ij}\right\}_{i\neq j}$ 
\begin{equation}\label{eq:features}
        \mathcal{F}_{ij}(n)=\left(\left[R_D(n)\right]_{ij},\left[R_{D+1}(n)\right]_{ij},\ldots,\left[R_M(n)\right]_{ij}\right),
\end{equation}
with $D\leq 1$ and $M\geq 3$ are affinely separable, i.e., there exists a particular hyperplane (up to a shift) that consistently partitions this set. 
The previous discussion translates into the feature domain (in view of Lemma~$1$ in~\cite{SMachado}): $\beta$ controls the \emph{drift} of the features separating hyperplane (compromising stability), while the \emph{variation}~$\overline{\Sigma}$ of the off-diagonal entries of~$\Sigma_x$ induces a spread of the features. 


\section{Identifiability under Noise Structure}




The next theorem provides a sufficient condition on the noise covariance $\Sigma_x$ and the interaction matrix $A$ that lead to the linear separability and stability of the (centered) features. 

Before proceeding to our main theorem, define ${\sf Osc}(\mathbf{z})=\max_i z_i- \min_i z_i$ as the difference between the maximum and minimum entries of the vector $\mathbf{z}$ -- we shall loosely refer to it as \emph{oscillation}. We state some properties of the map ${\sf Osc}\,:\,\mathbb{R}^{N}\longrightarrow \mathbb{R}_{+}$ that will be useful to establish the result.


\textbf{Property 1. [Contraction]} Given a symmetric stochastic matrix $\overline{A}$, we have
${\sf Osc}\left(\overline{A} v\right)\leq {\sf Osc}\left(v\right)$,
as each entry of $\overline{v}\overset{\Delta}=\overline{A}v$ lies in the convex hull~\cite{convex} of the set $\left\{v_1,v_2,\ldots,v_N\right\}$ of the entries of $v\in \mathbb{R}^N$ and, in particular, $\overline{v}_i\in \left[v_{\min},v_{\max}\right]$ for all $i$.


\textbf{Property 2. [Scalar linearity]} Observe that ${\sf Osc}\left(\alpha v\right)=\left|\alpha\right| {\sf Osc}\left(v\right)$ for any $v\in \mathbb{R}^N$ and $\alpha\in\mathbb{R}$. Indeed, if $\alpha>0$, then we have $\alpha v_{\max}>\alpha v_i$ for any $i=1,2,\ldots,N$.


\textbf{Property 3. [Subadditivity]} The map ${\sf Osc}(\cdot)$ obeys
${\sf Osc}\left(B+C\right)\leq {\sf Osc}\left(B\right)+ {\sf Osc}\left(C\right)$,
for any $B,C\in\mathbb{R}^{N}$.


\textbf{Property 4. [Submultiplicativity]} The map ${\sf Osc}(\cdot)$ obeys: If ${\sf Osc}\left(B v\right) \leq k_b {\sf Osc}\left(v\right)$ and ${\sf Osc}\left(C v\right) \leq  k_c {\sf Osc}\left(v\right)$ for all $v \in \mathbb{R}^N$
then,${\sf Osc}\left(CB v\right)\leq k_c {\sf Osc}\left(Bv\right) \leq k_bk_c {\sf Osc}\left(v\right)$
with $k_b,k_c>0$.

Let~$\mathcal{C}\left(\mathcal{K}\right)\overset{\Delta}=\frac{1}{\left|\mathcal{K}\right|}\sum_{k\in\mathcal{K}} k$ be the centroid of the set $\mathcal{K}$.

\begin{theorem}[Linear Separability \& Stability]\label{th:affine}
    Let $A=\rho \overline{A}$, where $\overline{A}$ is a stochastic matrix with $0<\rho<1$. Under Assumptions~$2$ and $3$, if
    \begin{equation}\label{eq:exogeneous}
        \frac{{\sf Osc}\left({\sf Off}(\Sigma_x)\right)}{\sigma_{{\sf gap}}^2}\leq \frac{A^{+}_{\min}(1-\rho^2)}{2\rho(\rho^2+1)},
    \end{equation}
    where $\sigma^2_{{\sf gap}}\overset{\Delta}=\sigma^2-\max_{i\neq j} \mathbb{E}\left[\mathbf{x}_i\mathbf{x}_j\right]>0$ and $A^{+}_{\min}$ is the smallest nonzero entry of the interaction matrix~$A$, then the centered features $\left\{\mathcal{F}_{ij}(n)\right\}_{i\neq j}-\mathcal{C}\left(\left\{\mathcal{F}_{ij}(n)\right\}_{i\neq j}\right)$ are linearly separable and stable.
\end{theorem}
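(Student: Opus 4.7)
The plan is to reduce the claim to the structural-consistency criterion~\eqref{ineq:separab} for the estimator $\widehat{R}_1(n)-\widehat{R}_3(n)$, and then invoke Lemma~$1$ of~\cite{SMachado}, which upgrades structural consistency to affine separability of the features $\{\mathcal{F}_{ij}(n)\}$; after subtracting the centroid, affine separability becomes linear separability. From Theorem~\ref{th:representation}, the error decomposes as
\begin{equation*}
\mathcal{E}=\frac{\beta\rho}{\sigma_{\sf gap}^{2}}\mathbf{1}\mathbf{1}^{\top}+\frac{1}{\sigma_{\sf gap}^{2}}(I-A^{2})\,T,\quad T:=\sum_{i=0}^{\infty}A^{i+1}\overline{\Sigma}A^{i}.
\end{equation*}
The first summand is a flat (constant) matrix, so it contributes an identical drift to every feature and vanishes under centering; the entire argument then reduces to controlling ${\sf Osc}$ of $(I-A^{2})T$ on the off-diagonal block.

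The computational core will extend Properties~$1$--$4$ from vectors to matrices. Defining ${\sf Osc}(X):=\max_{i,j}X_{ij}-\min_{i,j}X_{ij}$ and exploiting the symmetry of $A=\rho\overline{A}$, each column of $AX$ is $\rho$ times a stochastic convex combination of entries of the corresponding column of $X$, giving ${\sf Osc}(AX)\leq \rho\,{\sf Osc}(X)$; by symmetry, ${\sf Osc}(XA)\leq\rho\,{\sf Osc}(X)$. Iterating, and applying subadditivity to the infinite sum, yields
\begin{equation*}
{\sf Osc}(T)\leq \sum_{i=0}^{\infty}\rho^{2i+1}\,{\sf Osc}(\overline{\Sigma})=\frac{\rho}{1-\rho^{2}}\,{\sf Osc}(\overline{\Sigma}),
\end{equation*}
and then ${\sf Osc}\bigl((I-A^{2})T\bigr)\leq (1+\rho^{2})\,{\sf Osc}(T)\leq \tfrac{\rho(1+\rho^{2})}{1-\rho^{2}}\,{\sf Osc}(\overline{\Sigma})$. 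A short calculation identifies ${\sf Osc}(\overline{\Sigma})={\sf Osc}({\sf Off}(\Sigma_x))$: the common diagonal value of $\overline{\Sigma}$ is $\max_{k\neq l}[\Sigma_x]_{kl}-\beta$, which coincides with the largest off-diagonal entry of $\overline{\Sigma}$, so appending the diagonal neither raises the maximum nor lowers the minimum.

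Chaining these estimates gives ${\sf Osc}(\mathcal{E}_{S})\leq \tfrac{\rho(1+\rho^{2})}{(1-\rho^{2})\sigma_{\sf gap}^{2}}\,{\sf Osc}({\sf Off}(\Sigma_x))$, which by hypothesis~\eqref{eq:exogeneous} is at most $A^{+}_{\min}/2$. Condition~\eqref{ineq:separab} is therefore satisfied, so $\widehat{R}_1(n)-\widehat{R}_3(n)$ is structurally consistent w.h.p., and Lemma~$1$ of~\cite{SMachado} delivers affine separability of $\{\mathcal{F}_{ij}(n)\}_{i\neq j}$; centering converts affine to linear separability. Stability, i.e.\ persistence of the separating hyperplane under perturbations of $A$ and $\Sigma_x$, follows from the strict inequality in~\eqref{eq:exogeneous}, which supplies a positive separation margin. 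The main obstacle I anticipate is the matricial lift of the ${\sf Osc}$ calculus --- the properties in the paper are stated only for vectors, so one must verify carefully, using the symmetry of $A$ and treating diagonal and off-diagonal entries of $\overline{\Sigma}$ uniformly, that both left- and right-multiplication by $A$ contract ${\sf Osc}$ by exactly $\rho$, without spurious amplification coming from the constant diagonal shift induced by the decomposition~\eqref{eq:reprdecomp}.
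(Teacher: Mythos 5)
Your proof is correct and follows essentially the same route as the paper's: bound ${\sf Osc}(\mathcal{E})$ by $\tfrac{\rho(1+\rho^2)}{(1-\rho^2)\,\sigma^2_{\sf gap}}{\sf Osc}\left({\sf Off}(\Sigma_x)\right)$ using the contraction and subadditivity properties of ${\sf Osc}$, then invoke the thresholding criterion~\eqref{ineq:separab} together with Lemma~1 of~\cite{SMachado}. The only cosmetic difference is that you bound $\sum_{i} A^{i+1}\overline{\Sigma}A^{i}$ term-by-term via the $\rho$-contraction and a geometric series, whereas the paper sandwiches the sum between flat rank-one matrices; both yield the identical constant, and your explicit check that ${\sf Osc}(\overline{\Sigma})={\sf Osc}\left({\sf Off}(\Sigma_x)\right)$ merely fills in a detail the paper leaves implicit.
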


\begin{figure} [!t]
	\begin{center}		\includegraphics[scale= 0.38]{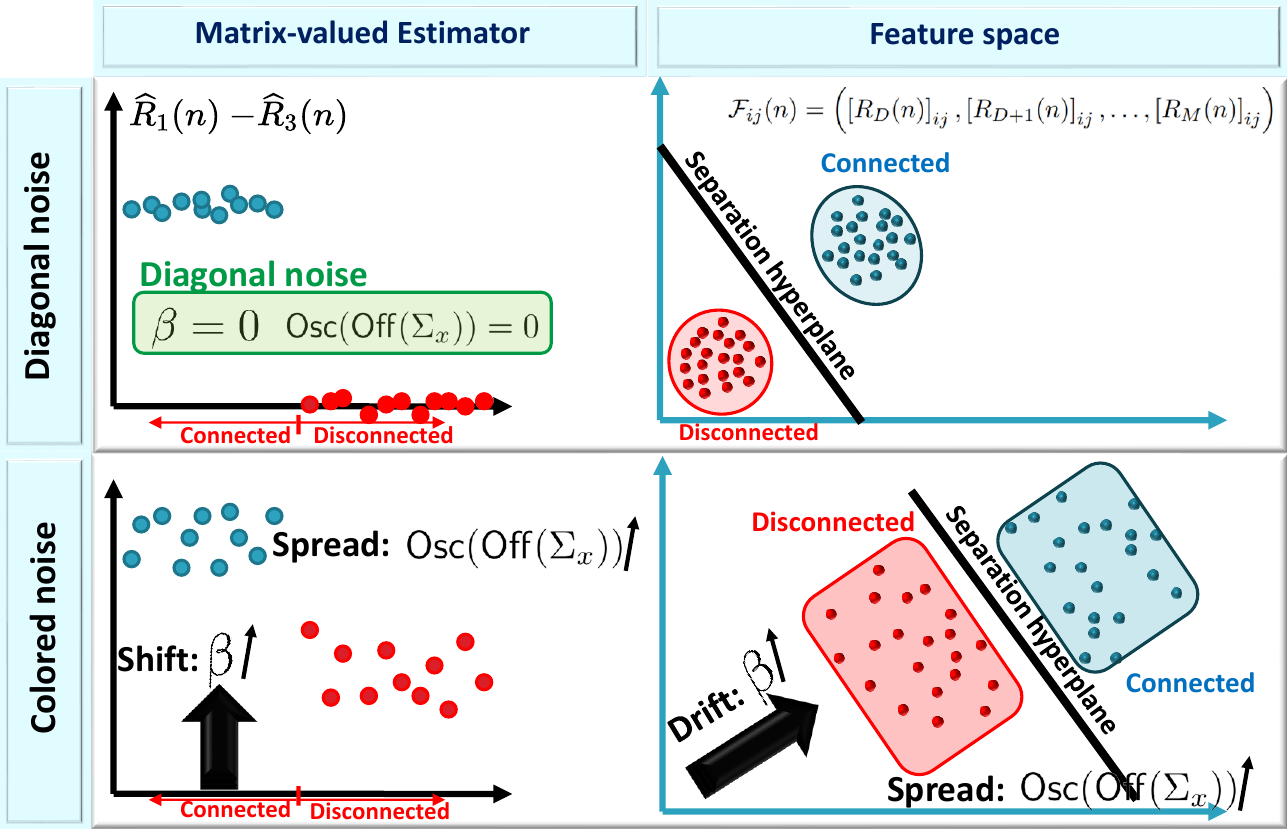} 
		\caption{Impact of the correlation structure of the noise on the features. The overarching idea (Theorem~$1$) is that the average~$\beta$ across the off-diagonals of the covariance $\Sigma_x$ yields a drift of the features, while the \emph{oscillation} across the off-diagonals of $\Sigma_x$ leads to a greater \emph{spread} of the features.}
		\label{fig:color}
	\end{center}
\end{figure}

\begin{proof}
The properties of the map ${\sf Osc}(\cdot)$ yield
\begin{equation}
\begin{array}{ccl}
{\sf Osc}\left(\left(I-A^2\right) V\right) & \leq & {\sf Osc}\left(V\right)+{\sf Osc}\left(A^2 V\right) \\
& \leq & {\sf Osc}(V)+\rho^2 {\sf Osc}\left(\overline{A}^2 V\right)\\ & \leq & \left(1+\rho^2\right){\sf Osc}\left(V\right)
\end{array},
\end{equation}
where the first inequality holds in view of Properties~$2$ and~$3$; the second inequality follows from Property~$3$; and the last inequality stems from 
Properties~$1$ and~$4$. Therefore, we have
\begin{equation}
\begin{array}{ccl}
{\sf Osc}\left(\left(I-A^2\right)A V\right) & = & \rho\, {\sf Osc}\left(\left(I-A^2\right)\overline{A} V\right) \\
& \leq & \rho\left(1+\rho^2\right) {\sf Osc}(V).
\end{array}
\end{equation}
Now, observe that $\sum_{i=0}^{\infty} A^i \overline{\Sigma} A^i$ is bounded above and below as follows
\begin{align*}
&\frac{\overline{\Sigma}_{\min}}{1-\rho^2}\mathbf{1}\mathbf{1}^{\top}  =  \overline{\Sigma}_{\min}\sum_{i=0}^{\infty} A^i \mathbf{1}\mathbf{1}^{\top} A^i
 \leq \sum_{i=0}^{\infty} A^i \overline{\Sigma} A^i \leq  \\
&\overline{\Sigma}_{\max}\sum_{i=0}^{\infty} A^i \mathbf{1}\mathbf{1}^{\top} A^i  =  \frac{\overline{\Sigma}_{\max}}{1-\rho^2}\mathbf{1}\mathbf{1}^{\top},
\end{align*}
and thus,${\sf Osc}\left(\sum_{i=0}^{\infty} A^i \overline{\Sigma} A^i\right) \leq \frac{\left(\overline{\Sigma}_{\max}-\overline{\Sigma}_{\min}\right)}{1-\rho^2}$.
Therefore,
\begin{equation}\label{ineq:imp}
\begin{array}{ccl}
{\sf Osc}\left(\mathcal{E}\right) & = & {\sf Osc}\left(\left(I-A^2\right)A\sum_{i=0}^{\infty} A^i \overline{\Sigma} A^i\right)\\
& \leq & \frac{\rho(1+\rho^2)}{1-\rho^2} \left(\overline{\Sigma}_{\max}-\overline{\Sigma}_{\min}\right)\\
& = & \frac{\rho(1+\rho^2)}{1-\rho^2}{\sf Osc}\left({\sf Off}(\overline{\Sigma}_x)\right).
\end{array}
\end{equation}
In view, of inequality~\eqref{ineq:separab} and Theorem~$1$, we have structural consistency (or features separability) whenever ${\sf Osc}\left(\mathcal{E}\right)\leq \frac{\sigma^2_{\sf gap} A^{+}_{\min}}{2}$,
which holds whenever $\frac{\rho(1+\rho^2)}{1-\rho^2}{\sf Osc}\left({\sf Off}(\overline{\Sigma}_x)\right)\leq \frac{\sigma^2_{\sf gap} A^{+}_{\min}}{2}$,
in view of inequality~\eqref{ineq:imp}.
\end{proof}
 

\begin{remark}[Exogenous interventions]\label{rem:ex} Under exogenous interventions to the NDS, i.e., $\mathbf{y}(n+1)=A\mathbf{y}(n)+\mathbf{x}(n+1)+\mathbf{\xi}(n+1),$
where $\xi(n)\sim \mathcal{N}(0,\sigma_{\xi}^2)$ is i.i.d., and independent of the process $\left(\mathbf{x}(n)\right)$, equation~\eqref{eq:exogeneous} becomes $\frac{{\sf Osc}\left({\sf Off}(\Sigma_x)\right)}{\sigma_{{\sf gap}}^2+\sigma_{\xi}^2}\leq \frac{A_{\min}(1-\rho^2)}{2\rho(\rho^2+1)},$
implying that regardless of the covariance matrix of the input process $\left(\mathbf{x}(n)\right)$, the features are linearly separable w.h.p. as soon as the level (or variance) $\sigma_{\xi}^2$ of the external intervention $\left(\xi(n)\right)$ is high enough. 
\end{remark}


We recall that due to the sensitivity of the separating hyperplane on the color of the noise, the method deployed in~\cite{SMachado} degrades as the average of the absolute values of the off-diagonal entries of $\Sigma_x$ increases (captured by $\beta$). To mitigate the shift of the separating hyperplane and allow the deployment of supervised methods, we renormalize the features applying \emph{standard scaler}. Further, we observed empirically that  incorporating novel matrix-valued estimators into the vector of features~$\mathcal{F}_{ij}(n)$ of each pair $ij$, greatly helps to mitigate both the drift and the \emph{oscillation} in the features. This is introduced next.   

\section{Novel Features}
Define the set of features $\mathcal{T}^{(n)}=\left\{\mathcal{T}^{(n)}_{ij}\right\}_{ij}$
\begin{equation}\label{eq:featurestau-1}
 \mathcal{T}^{(n)}_{ij} \!=\!       \left(\!\!\left[\!\!\left(\left[\widehat{R}_0(n)\right]_{S}\right)^{-1}\right]_{ij}\!\!,\ldots,\!\!\left[\left(\left[\widehat{R}_M(n)\right]_{S}\right)^{-1}\right]_{ij}\!\right),\nonumber
\end{equation}
built from the inverse of the observed part of the lag moments. We have empirically observed that a certain normalized version of these features exhibits nontrivial separability and stability properties across a broad range of connectivity regimes and noise correlation levels~$\beta$. Namely, coupled with the features $\mathcal{F}^{(n)}$, see equation~\eqref{eq:features}, they aid in mitigating the \emph{oscillation} and drift caused by the noise structure. More concretely, define $\mathcal{K}_{ij}^{(n)}=\left(\mathcal{F}_{ij}^{(n)},\mathcal{T}_{ij}^{(n)}\right)$ for each $ij$.  
Then, given Lemma~$2$ in~\cite{SMachado}, the identifiability gap separating the features of connected pairs from features of disconnected pairs (refer to its formal definition in~\cite{SMachado}) is larger for the features $\mathcal{K}^{(n)}$ than for either the features $\mathcal{T}^{(n)}$ or $\mathcal{F}^{(n)}$ w.h.p.~under the sufficient condition in Theorem~$2$. This tends to make the features $\mathcal{K}^{(n)}$ more amenable to generalization when using supervised methods to cluster them.


In summary, the proposed features lead to the appropriate identifiability properties if either the variability of the off-diagonal entries of $\Sigma_x$ is not too large (Theorem~\ref{th:affine}) or the exogeneous intervention is large enough (Remark~\ref{rem:ex}). 

\section{Methodology}
We adopt the following training set
\begin{equation} 
{\sf Tr}^{(n)}\overset{\Delta}=\left\{\left(\overline{\mathcal{K}}^{(n)}_{ij},\mathbf{1}_{\left\{A_{ij}\neq 0\right\}}\right)\right\}_{i\neq j}, 
\vspace{-5px}
\end{equation}
where we define the \emph{normalized} features  $\overline{\mathcal{K}}^{(n)}_{ij}:=\left[{\sf SC}\left(\mathcal{K}^{(n)}\right)\right]_{ij}$,
with ${\sf SC}$ denoting the \emph{standard-scaler}, i.e., the set~$\mathcal{K}^{(n)}=\left\{\mathcal{K}^{(n)}_{ij}\right\}_{i\neq j}$ is centered and the entries of each feature $\mathcal{K}^{(n)}_{ij}$ are normalized to meet unit variance as described in~\cite{scikit-learn}. The standard-scaler normalization helps to mitigate the drift of the features  caused by the average noise cross-correlation~$\beta$.
That is, for training, we provide a normalized set of features~$\overline{\mathcal{K}}^{(n)}$, with $\overline{\mathcal{K}}^{(n)}_{ij}$ being the feature of the pair $ij$ used as input to a Feed Forward Neural Network (FFNN) whose training output should be the ground truth connectivity of the pair $ij$, i.e., $1_{\left\{A_{ij}\neq 0\right\}}$.
\begin{figure*}[!t]
	\centering
\includegraphics[width=0.97\linewidth]{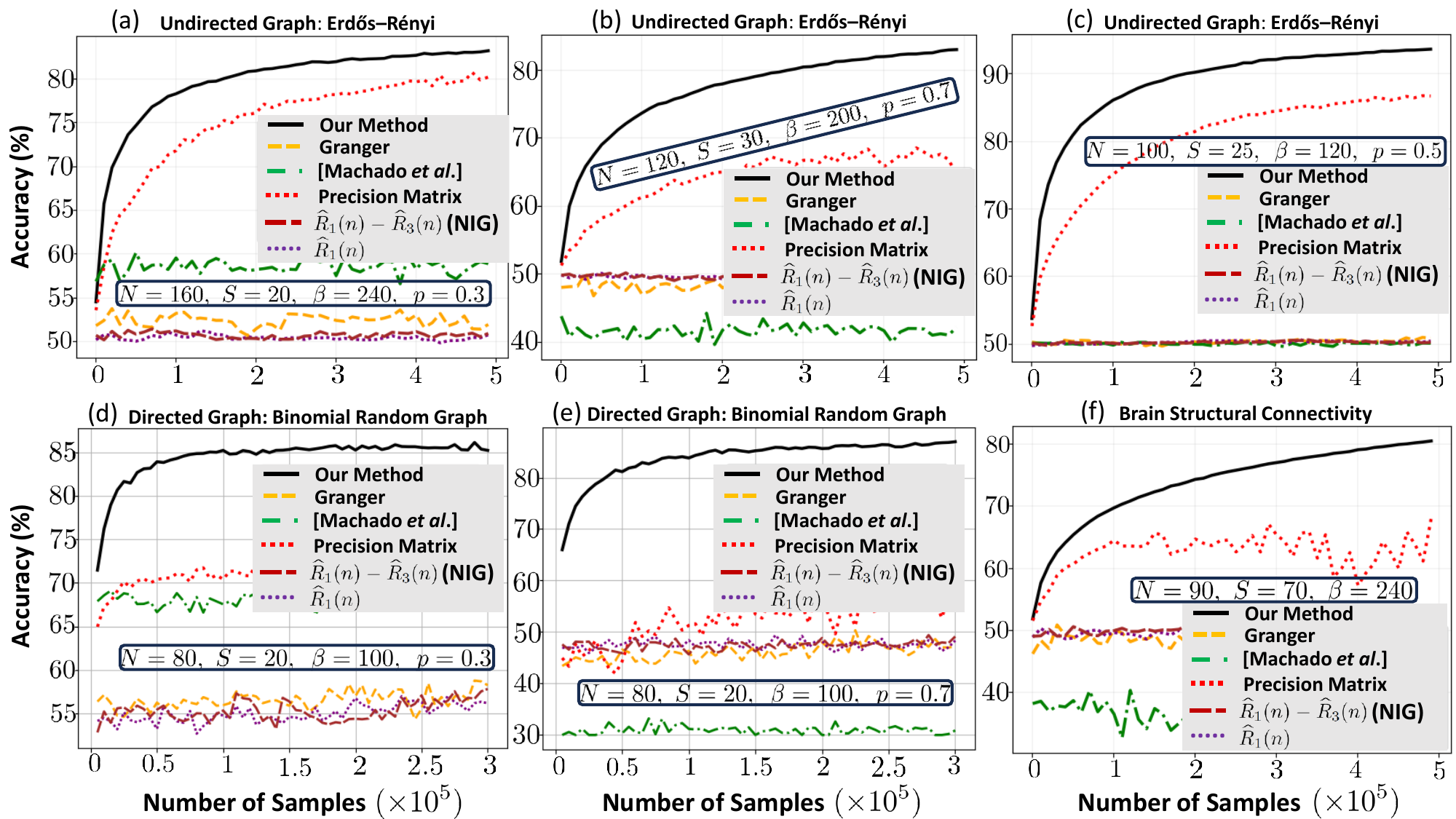}
	\caption{Plots (a)-(f) contrast the accuracy of distinct methods as a function of the number of time series samples: $i)$ (a)-(c) undirected networks (graph learning); (d)-(e) directed networks (causal inference); and (f) refers to a directed Brain connectome matrix. The hyperparameters $N$, $S$, $\beta$ and $p$ displayed framed in the plots entail the underlying regime.}
	\label{fig:Performance}
\end{figure*}


We generate the time series data~$\left\{\mathbf{y}(i)\right\}_{i=0}^{n}$ as done in~\cite{SMachado}, i.e., we first build a graph~$G$ on~$N$ nodes and then assign weights to the graph~$G$ to obtain an $N\times N$ interaction matrix $A$ that is nonnegative and stable, i.e., $\rho(A)<1$. Namely, we create~$G$ as a realization of an Erdős–Rényi random graph -- without self-loops, i.e., $G_{ii}=0$ for all $i$ -- with $N$ nodes and probability of edge or arrow placing~$p$. The value of~$p$  controls the sparsity of the underlying network. Then, the interaction matrix $A$ is characterized by: $A_{ij} = \alpha \frac{G_{ij}}{d_{\max}(G)}$, for $i\neq j$; and $A_{ii}  =  \rho-\sum_{k\neq i} A_{ik}$,  for all $i$ 
where $d_{\max}(G)$ is the maximum \emph{inwards}-degree of the underlying graph $G$ and $\alpha,\rho$ are stability hyperparameters with $0<\alpha\leq\rho<1$. This weight assignment is often referred to as \emph{Laplacian rule}~\cite{chungspectral}. This generative process renders the NDS~\eqref{eq:model} stable and with a support graph of interactions given by $G$ -- the main object of inference in our framework. To generate a directed graph $G$, we considered the realization of the Erdős–Rényi version for directed graphs also known as Binomial random graph model.

We train FFNNs over one realization of a random graph model with~$p=0.5$ and~$N=50$ and apply this trained FFNN to networked systems with distinct connectivities obtained with different $p$. Further, for training, we generate datasets with distinct average noise cross-correlation levels $\beta=0,5,10,\ldots,50$. Remarkably, our method generalizes to distinct connectivity regimes and different noise correlation levels as seen in the next Section. 



\section{Simulation Results}
\label{sec:simulation}

Fig.~\ref{fig:Performance} displays the performance of distinct structure estimation methods. The performance is captured by the \emph{accuracy} of the graph structure estimation as a function of the number of samples of the time series, known as empirical sample complexity. The accuracy is determined by the ratio of the number of \emph{directed} pairs correctly classified by each method over the total number of directed pairs comprising the graph.

We have chosen to compare our causal inference method with: $i)$ the recently proposed feature based approach~\cite{SMachado}; $ii)$ the Granger estimator~\cite{Geigeretal15,tomo_journal}; $iii)$ the one-lag $\widehat{R}_1(n)$ estimator~\cite{open_Journal}; iv) the NIG $\widehat{R}_1(n)-\widehat{R}_3(n)$ estimator~\cite{R1minusR3}; v) the Precision matrix or Graphical Lasso for sparse regimes (interchangeably)~\cite{Lasso}. 
While other graph learning or causal inference methods are optimal for specific regimes of observability or connectivity (mostly for \emph{very} sparse networks), the methods $ii)-v)$ we choose as benchmarks are popular and can be applied to partial observability (with technical guarantees, e.g., \cite{Mei_latent,tomo_journal_proceedings}) and across various connectivity graphs  (dense or sparse networks), thus, ideal to be contrasted with our method across distinct regimes of observability and connectivity. As in~\cite{R1minusR3}, we deploy Gaussian mixture on the estimators $ii)-v)$
 to recover the underlying graph from the estimated affinity matrix.  

Fig.~\ref{fig:Performance} demonstrates the overall performance superiority of our method in the colored regimes considered. Fig.~\ref{fig:Performance}~(f) refers to a human brain \emph{connectome}, obtained via tractography~\cite{HB}. In this latter case, we generate the time series~$\left\{\mathbf{y}(n)\right\}_{n=1}^M$ for this real world network to infer its connectivity. The regimes (a)-(c) refer to undirected graphs obtained as realizations of an Erdős–Rényi random graph model, and (d),(e) refer to directed graphs, with the networks drawn from a Binomial random graph model. 

%
%

The hyperparameters $N$, $S$, $\beta$ and $p$ represent the number of nodes in the NDS, number of observed nodes, average cross-correlation of the noise (average of the off-diagonal entries of $\Sigma_x$), and the probability of edge or arrow placing in the corresponding  Erdős–Rényi or Binomial random graph model (where applicable). Additional experiments for undirected graphs are in~\cite{ASantos_Icassp_2024}. 


\section{Concluding Remarks}
%
We studied the problem of network identification from the times series data streaming from a linear networked dynamical system (NDS). We considered the challenging framework of partial observability under colored noise. We first devised a novel identifiability condition for linear NDS under this regime. Then, departing from the scalar based methods that are dominant in causal inference, we considered a feature based paradigm: to each pair of nodes, we assign a feature vector. We studied the impact of the noise structure on the identifiability properties of the features of~\cite{SMachado} to propose novel features. We trained FFNNs with our features to cluster them and recover the causal network structure. The numerical experiments across distinct regimes of connectivity, observability and noise correlation demonstrate the competitive performance of our approach. 


\section{Acknowledgments}
This work is partially supported by the U.S. National Science Foundation under Grant CCF 2327905. The work of A. Santos was funded in part by FCT/MCTES through national funds and when applicable co-funded EU funds under the project UIDB/50008/2020 and project UIDP/00326/2020. José M. F. Moura was partially supported by the U.S. National Science Foundation under Grant CCN 1513936. D. Rente and R. Seabra were funded in part by FCT - Foundation for Science and Technology, Portugal, under project UIDP/00326/2020. The source code for the numerical experiments can be found at \url{https://github.com/seabrapt/brain_underlying_structure_identification}.

{
	\bibliographystyle{aaai24}
	\bibliography{biblio}

\begin{thebibliography}{51}
\providecommand{\natexlab}[1]{#1}

\bibitem[{Aalto et~al.(2020)Aalto, Viitasaari, Ilmonen, Mombaerts, and
  Goncalves}]{GRN0}
Aalto, A.; Viitasaari, L.; Ilmonen, P.; Mombaerts, L.; and Goncalves, J. 2020.
\newblock Gene regulatory network inference from sparsely sampled noisy data.
\newblock \emph{Nature Communications}, 11(3493).

\bibitem[{Anandkumar et~al.(2013)Anandkumar, Hsu, Javanmard, and
  Kakade}]{anandkumar13}
Anandkumar, A.; Hsu, D.; Javanmard, A.; and Kakade, S. 2013.
\newblock Learning Linear Bayesian Networks with Latent Variables.
\newblock In \emph{Proceedings of the 30th International Conference on Machine
  Learning}, volume~28 of \emph{Proceedings of Machine Learning Research},
  249--257. Atlanta, Georgia, USA: PMLR.

\bibitem[{Anandkumar and Valluvan(2013)}]{AnandkumarValluvanAOS}
Anandkumar, A.; and Valluvan, R. 2013.
\newblock Learning loopy graphical models with latent variables: Efficient
  methods and guarantees.
\newblock \emph{Ann. Statist.}, 41(2): 401--435.

\bibitem[{Barbier et~al.(2023)Barbier, Camilli, Mondelli, and
  Sáenz}]{Barbier_noise}
Barbier, J.; Camilli, F.; Mondelli, M.; and Sáenz, M. 2023.
\newblock Fundamental limits in structured principal component analysis and how
  to reach them.
\newblock \emph{Proceedings of the National Academy of Sciences of the United
  States of America}, 120.

\bibitem[{Barbier et~al.(2019)Barbier, Krzakala, Macris, Miolane, and
  Zdeborova}]{Barbier_Phase}
Barbier, J.; Krzakala, F.; Macris, N.; Miolane, L.; and Zdeborova, L. 2019.
\newblock Optimal errors and phase transitions in high-dimensional generalized
  linear models.
\newblock \emph{Proceedings of the National Academy of Sciences}, 116:
  201802705.

\bibitem[{Barrat, Barthélemy, and Vespignani(2008)}]{dynamics_complex}
Barrat, A.; Barthélemy, M.; and Vespignani, A. 2008.
\newblock \emph{Dynamical Processes on Complex Networks}.
\newblock Cambridge University Press.

\bibitem[{Barreira and Valls(2012)}]{Barreira}
Barreira, L.; and Valls, C. 2012.
\newblock \emph{Ordinary Differential Equations: Qualitative Theory}.
\newblock Graduate studies in mathematics. American Mathematical Society.
\newblock ISBN 9780821887493.

\bibitem[{Cao et~al.(2023)Cao, Enouen, Wang, Song, Meng, Niu, and
  Liu}]{Irregular_Time_Series_Causal}
Cao, D.; Enouen, J.; Wang, Y.; Song, X.; Meng, C.; Niu, H.; and Liu, Y. 2023.
\newblock Estimating Treatment Effects from Irregular Time Series Observations
  with Hidden Confounders.
\newblock In \emph{Proceedings of the Thirty-Seventh AAAI Conference on
  Artificial Intelligence and Thirty-Fifth Conference on Innovative
  Applications of Artificial Intelligence and Thirteenth Symposium on
  Educational Advances in Artificial Intelligence}, AAAI'23/IAAI'23/EAAI'23.
  AAAI Press.
\newblock ISBN 978-1-57735-880-0.

\bibitem[{Casadiego et~al.(2017)Casadiego, Nitzan, Hallerberg, and
  Timme}]{GRN3}
Casadiego, J.; Nitzan, M.; Hallerberg, S.; and Timme, M. 2017.
\newblock Model-free inference of direct network interactions from nonlinear
  collective dynamics.
\newblock \emph{Nature Communications}, 8(2192).

\bibitem[{Chen, Wang, and Shen(2022)}]{R1minusR3}
Chen, Y.; Wang, Z.; and Shen, X. 2022.
\newblock An Unbiased Symmetric Matrix Estimator for Topology Inference under
  Partial Observability.
\newblock \emph{IEEE Signal Processing Letters}, 29(02): 1257--1261.

\bibitem[{Cheng et~al.(2023)Cheng, Yang, Xiao, Li, Suo, He, and
  Dai}]{CUTS_Neural_Causal}
Cheng, Y.; Yang, R.; Xiao, T.; Li, Z.; Suo, J.; He, K.; and Dai, Q. 2023.
\newblock {CUTS}: Neural Causal Discovery from Irregular Time-Series Data.
\newblock In \emph{The Eleventh International Conference on Learning
  Representations (ICLR)}.

\bibitem[{Ching and Tam(2017)}]{Ching2017ReconstructingLI}
Ching, E. S.~C.; and Tam, H.~C. 2017.
\newblock Reconstructing links in directed networks from noisy dynamics.
\newblock \emph{Phys. Rev. E}, 95: 010301.

\bibitem[{Chow and Liu(1968)}]{ChowLiu}
Chow, C.; and Liu, C. 1968.
\newblock Approximating discrete probability distributions with dependence
  trees.
\newblock \emph{IEEE Transactions on Information Theory}, 14(3): 462--467.

\bibitem[{Chung(1994)}]{chungspectral}
Chung, F. 1994.
\newblock \emph{Spectral Graph Theory}.
\newblock Number~92 in CBMS Regional Conference Series. Conference Board of the
  Mathematical Sciences.
\newblock ISBN 9780821889367.

\bibitem[{Douw et~al.(2010)Douw, De~Groot, Dellen, Heimans, Ronner, Stam, and
  Reijneveld}]{epi_pred}
Douw, L.; De~Groot, M.; Dellen, E.; Heimans, J.; Ronner, H.; Stam, C.; and
  Reijneveld, J. 2010.
\newblock ‘Functional Connectivity’ is a Sensitive Predictor of Epilepsy
  Diagnosis after the First Seizure.
\newblock \emph{PloS one}, 5.

\bibitem[{Eroglu et~al.(2020)Eroglu, Tanzi, van Strien, and Pereira}]{Pereira}
Eroglu, D.; Tanzi, M.; van Strien, S.; and Pereira, T. 2020.
\newblock Revealing Dynamics, Communities, and Criticality from Data.
\newblock \emph{Phys. Rev. X}, 10: 021047.

\bibitem[{Friedman, Hastie, and Tibshirani(2007)}]{Lasso}
Friedman, J.; Hastie, T.; and Tibshirani, R. 2007.
\newblock {Sparse inverse covariance estimation with the graphical lasso}.
\newblock \emph{Biostatistics}, 9(3): 432--441.

\bibitem[{Ganesh, Massouli\'e, and Towsley(2005)}]{topoepidemics}
Ganesh, A.; Massouli\'e, L.; and Towsley, D. 2005.
\newblock The effect of network topology on the spread of epidemics.
\newblock In \emph{Proceedings IEEE 24th Annual Joint Conference of the IEEE
  Computer and Communications Societies.}, volume~2, 1455--1466.

\bibitem[{Geiger et~al.(2015)Geiger, Zhang, Sch{\"o}lkopf, Gong, and
  Janzing}]{Geigeretal15}
Geiger, P.; Zhang, K.; Sch{\"o}lkopf, B.; Gong, M.; and Janzing, D. 2015.
\newblock Causal Inference by Identification of Vector Autoregressive Processes
  with Hidden Components.
\newblock In \emph{Proc. International Conference on Machine Learning},
  volume~37, 1917--1925.

\bibitem[{Hiriart-Urruty and Lemar\'{e}chal(2001)}]{convex}
Hiriart-Urruty, J.-B.; and Lemar\'{e}chal, C. 2001.
\newblock \emph{Fundamentals of Convex Analysis}.
\newblock Grundlehren Text Editions. Springer-Verlag Berlin Heidelberg.
\newblock ISBN 978-3-540-42205-1.

\bibitem[{Hossain, Adesunkanmi, and Kumar(2023)}]{Koopman_Linear}
Hossain, R.~R.; Adesunkanmi, R.; and Kumar, R. 2023.
\newblock Data-Driven Linear Koopman Embedding for Networked Systems:
  Model-Predictive Grid Control.
\newblock \emph{IEEE Systems Journal}, 17(3): 4809--4820.

\bibitem[{Huang and Ding(2016)}]{Huang_Brain}
Huang, H.; and Ding, M. 2016.
\newblock Linking Functional Connectivity and Structural Connectivity
  Quantitatively: A Comparison of Methods.
\newblock \emph{Brain Connectivity}, 6(2): 99--108.

\bibitem[{Lahmanovich and James(1976)}]{singlevirus}
Lahmanovich, A.; and James, A.~Y. 1976.
\newblock A Deterministic Model for Gonorrhea in a Nonhomogeneous Population.
\newblock \emph{Mathematical Biosciences}, 28: 221--236.

\bibitem[{Liégeois et~al.(2020)Liégeois, Santos, Matta, Van De~Ville, and
  Sayed}]{brainaugusto}
Liégeois, R.; Santos, A.; Matta, V.; Van De~Ville, D.; and Sayed, A.~H. 2020.
\newblock Revisiting correlation-based functional connectivity and its
  relationship with structural connectivity.
\newblock \emph{Network Neuroscience}, 4(4): 1235--1251.

\bibitem[{Loh and Wainwright(2013)}]{lohwainwright-2013}
Loh, P.-L.; and Wainwright, M.~J. 2013.
\newblock {Structure estimation for discrete graphical models: Generalized
  covariance matrices and their inverses}.
\newblock \emph{The Annals of Statistics}, 41(6): 3022 -- 3049.

\bibitem[{Lusch, Kutz, and Brunton(2018)}]{Nonlinear_to_linear}
Lusch, B.; Kutz, J.; and Brunton, S. 2018.
\newblock Deep learning for universal linear embeddings of nonlinear dynamics.
\newblock \emph{Nature Communications}, 9.

\bibitem[{Machado et~al.(2023)Machado, Sridhar, Gil, Henriques, Moura, and
  Santos}]{SMachado}
Machado, S.; Sridhar, A.; Gil, P.; Henriques, J.; Moura, J. M.~F.; and Santos,
  A. 2023.
\newblock Recovering the Graph Underlying Networked Dynamical Systems under
  Partial-observability: a Deep Learning Approach.
\newblock In \emph{Proceedings of the 37th AAAI Coference on Artificial
  Intelligence}. AAAI.

\bibitem[{Marcinkevi{\v{c}}s and Vogt(2021)}]{Granger_Causal_Interpretable}
Marcinkevi{\v{c}}s, R.; and Vogt, J.~E. 2021.
\newblock Interpretable Models for Granger Causality Using Self-explaining
  Neural Networks.
\newblock In \emph{International Conference on Learning Representations}.

\bibitem[{Materassi and Salapaka(2012)}]{MaterassiSalapakaCDC2012}
Materassi, D.; and Salapaka, M.~V. 2012.
\newblock Network reconstruction of dynamical polytrees with unobserved nodes.
\newblock In \emph{Proc. IEEE Conference on Decision and Control (CDC)},
  4629--4634. Maui, Hawaii.

\bibitem[{Materassi and Salapaka(2015)}]{MaterassiSalapakaCDC2015}
Materassi, D.; and Salapaka, M.~V. 2015.
\newblock Identification of network components in presence of unobserved nodes.
\newblock In \emph{Proc. IEEE Conference on Decision and Control (CDC)},
  1563--1568. Osaka, Japan.

\bibitem[{Matta, Santos, and Sayed(2020)}]{tomo_journal_proceedings}
Matta, V.; Santos, A.; and Sayed, A.~H. 2020.
\newblock Graph Learning under Partial Observability.
\newblock \emph{Proceedings of the IEEE}, 108: 2049 -- 2066.

\bibitem[{Matta, Santos, and Sayed(2022)}]{open_Journal}
Matta, V.; Santos, A.; and Sayed, A.~H. 2022.
\newblock Graph Learning over Partially Observed Diffusion Networks: Role of
  Degree Concentration.
\newblock \emph{IEEE Open Journal of Signal Processing}, 335--371.

\bibitem[{Mauroy and Goncalves(2016)}]{Koopman_Gon}
Mauroy, A.; and Goncalves, J. 2016.
\newblock Linear identification of nonlinear systems: A lifting technique based
  on the {K}oopman operator.
\newblock In \emph{2016 IEEE 55th Conference on Decision and Control (CDC)},
  6500--6505. Las Vegas, USA.

\bibitem[{Mei and Moura(2018)}]{Mei_latent}
Mei, J.; and Moura, J. M.~F. 2018.
\newblock SILVar: Single Index Latent Variable Models.
\newblock \emph{IEEE Transactions on Signal Processing}, 66(11): 2790--2803.

\bibitem[{Melnychuk, Frauen, and Feuerriegel(2022)}]{Causal_Transformer}
Melnychuk, V.; Frauen, D.; and Feuerriegel, S. 2022.
\newblock Causal Transformer for Estimating Counterfactual Outcomes.
\newblock In \emph{ICML}, 15293--15329.

\bibitem[{Montanari and Pereira(2009)}]{Bento2009}
Montanari, A.; and Pereira, J. 2009.
\newblock Which graphical models are difficult to learn?
\newblock In \emph{Advances in Neural Information Processing Systems},
  volume~22. Vancouver, Canada.

\bibitem[{Passemiers, Moreau, and Raimondi(2022)}]{BoxCox2}
Passemiers, A.; Moreau, Y.; and Raimondi, D. 2022.
\newblock {Fast and accurate inference of gene regulatory networks through
  robust precision matrix estimation}.
\newblock \emph{Bioinformatics}, 38(10): 2802--2809.

\bibitem[{Pearl(2009)}]{pearl_2009}
Pearl, J. 2009.
\newblock \emph{Causality}.
\newblock Cambridge University Press, 2 edition.

\bibitem[{Pedregosa et~al.(2011)Pedregosa, Varoquaux, Gramfort, Michel,
  Thirion, Grisel, Blondel, Prettenhofer, Weiss, Dubourg, Vanderplas, Passos,
  Cournapeau, Brucher, Perrot, and Duchesnay}]{scikit-learn}
Pedregosa, F.; Varoquaux, G.; Gramfort, A.; Michel, V.; Thirion, B.; Grisel,
  O.; Blondel, M.; Prettenhofer, P.; Weiss, R.; Dubourg, V.; Vanderplas, J.;
  Passos, A.; Cournapeau, D.; Brucher, M.; Perrot, M.; and Duchesnay, E. 2011.
\newblock Scikit-learn: Machine Learning in {P}ython.
\newblock \emph{Journal of Machine Learning Research}, 12: 2825--2830.

\bibitem[{Pereira, Ibrahimi, and Montanari(2010)}]{NIPS_Bento}
Pereira, J.; Ibrahimi, M.; and Montanari, A. 2010.
\newblock Learning Networks of Stochastic Differential Equations.
\newblock In \emph{Advances in Neural Information Processing Systems},
  volume~23. Curran Associates, Inc.

\bibitem[{Peters, Janzing, and Schlkopf(2017)}]{Elements_Causal_Inference}
Peters, J.; Janzing, D.; and Schlkopf, B. 2017.
\newblock \emph{Elements of Causal Inference: Foundations and Learning
  Algorithms}.
\newblock The MIT Press.
\newblock ISBN 0262037319.

\bibitem[{Ren et~al.(2010)Ren, Wang, Li, and Lai}]{Noise_Jien_Ren}
Ren, J.; Wang, W.-X.; Li, B.; and Lai, Y.-C. 2010.
\newblock Noise Bridges Dynamical Correlation and Topology in Coupled
  Oscillator Networks.
\newblock \emph{Phys. Rev. Lett.}, 104: 058701.

\bibitem[{Ren et~al.(2019)Ren, Gleinig, Helbing, and
  Antulov-Fantulin}]{Net_dismantling2}
Ren, X.-L.; Gleinig, N.; Helbing, D.; and Antulov-Fantulin, N. 2019.
\newblock Generalized network dismantling.
\newblock \emph{Proceedings of the National Academy of Sciences}, 116(14):
  6554--6559.

\bibitem[{Runge et~al.(2019)Runge, Bathiany, Bollt, Camps-Valls, Coumou, Deyle,
  Glymour, Kretschmer, Mahecha, Mu{\~n}oz-Mar{\'i}, van Nes, Peters, Quax,
  Reichstein, Scheffer, Scholkopf, Spirtes, Sugihara, Sun, Zhang, and
  Zscheischler}]{Earth_Systems_Causation}
Runge, J.; Bathiany, S.; Bollt, E.~M.; Camps-Valls, G.; Coumou, D.; Deyle,
  E.~R.; Glymour, C.; Kretschmer, M.; Mahecha, M.~D.; Mu{\~n}oz-Mar{\'i}, J.;
  van Nes, E.~H.; Peters, J.; Quax, R.; Reichstein, M.; Scheffer, M.;
  Scholkopf, B.; Spirtes, P.; Sugihara, G.; Sun, J.; Zhang, K.; and
  Zscheischler, J. 2019.
\newblock Inferring causation from time series in Earth system sciences.
\newblock \emph{Nature Communications}, 10.

\bibitem[{Santos, Matta, and Sayed(2020)}]{tomo_journal}
Santos, A.; Matta, V.; and Sayed, A.~H. 2020.
\newblock Local Tomography of Large Networks under the Low-Observability
  Regime.
\newblock \emph{IEEE Transactions on Information Theory}, 66: 587 -- 613.

\bibitem[{Santos et~al.(2024)Santos, Rente, Seabra, and
  Moura}]{ASantos_Icassp_2024}
Santos, A.; Rente, D.; Seabra, R.; and Moura, J. M.~F. 2024.
\newblock Inferring the Graph of Networked Dynamical Systems under Partial
  Observability and Spatially Colored Noise.
\newblock In \emph{The IEEE International Conference on Acoustics, Speech and
  Signal Processing, 2024 (Accepted)}.

\bibitem[{{Shaikh Veedu} and Salapaka(2023)}]{Salapaka_Colored}
{Shaikh Veedu}, M.; and Salapaka, M.~V. 2023.
\newblock Topology identification under spatially correlated noise.
\newblock \emph{Automatica}, 156: 111182.

\bibitem[{Spirtes(2001)}]{spirtes01a}
Spirtes, P. 2001.
\newblock An Anytime Algorithm for Causal Inference.
\newblock In Richardson, T.~S.; and Jaakkola, T.~S., eds., \emph{Proceedings of
  the Eighth International Workshop on Artificial Intelligence and Statistics},
  volume~R3 of \emph{Proceedings of Machine Learning Research}, 278--285. PMLR.
\newblock Reissued by PMLR on 31 March 2021.

\bibitem[{Spirtes, Glymour, and Scheines(2000)}]{Spirtes2000}
Spirtes, P.; Glymour, C.; and Scheines, R. 2000.
\newblock \emph{Causation, Prediction, and Search}.
\newblock MIT press, 2nd edition.

\bibitem[{Wan et~al.(2014)Wan, Liu, Cheung, and Tong}]{topoepidemic2}
Wan, X.; Liu, J.; Cheung, K.-W.; and Tong, T. 2014.
\newblock Inferring Epidemic Network Topology from Surveillance Data.
\newblock \emph{PloS one}, 9: e100661.

\bibitem[{Škoch et~al.(2022)Škoch, Rehák~Bučková, Mareš, Tintera, Sanda,
  Jajcay, Horacek, Spaniel, and Hlinka}]{HB}
Škoch, A.; Rehák~Bučková, B.; Mareš, J.; Tintera, J.; Sanda, P.; Jajcay,
  L.; Horacek, J.; Spaniel, F.; and Hlinka, J. 2022.
\newblock Human brain structural connectivity matrices–ready for modelling.
\newblock \emph{Scientific Data}, 9: 486.

\end{thebibliography}
}

\end{document}